\documentclass[11pt,a4paper]{article}

\usepackage[utf8]{inputenc}
\usepackage[T1]{fontenc}
\usepackage{lmodern}
\usepackage{amsmath,amssymb,amsthm}
\usepackage{mathtools}
\usepackage{bm}
\usepackage{booktabs}
\usepackage{graphicx}
\usepackage[margin=1in]{geometry}
\usepackage[hypertexnames=false]{hyperref}
\usepackage{cleveref}
\usepackage{algorithm}
\usepackage{algpseudocode}
\usepackage{natbib}
\usepackage{microtype}
\usepackage{xcolor}
\usepackage{multirow}

\newtheorem{theorem}{Theorem}

\newtheorem{proposition}[theorem]{Proposition}
\newtheorem{definition}[theorem]{Definition}
\newtheorem{remark}[theorem]{Remark}
\newtheorem{assumption}[theorem]{Assumption}

\newcommand{\R}{\mathbb{R}}
\newcommand{\dd}{\mathrm{d}}
\newcommand{\pd}[2]{\frac{\partial #1}{\partial #2}}
\newcommand{\thetastar}{\bm{\theta}^*}
\newcommand{\thetabeta}{\bm{\theta}^\beta}
\newcommand{\bomega}{\bm{\omega}}
\newcommand{\btheta}{\bm{\theta}}
\newcommand{\transpose}{\mathsf{T}}

\begin{document}

\title{%
  The Phase Is the Gradient: Equilibrium Propagation\\
  for Frequency Learning in Kuramoto Networks
}

\author{
  Mani Rash Ahmadi\\
  Calibur Labs\\
  \texttt{mani@caliburlabs.com}
}

\date{April 2026}

\maketitle

\begin{abstract}
We prove that in a coupled Kuramoto oscillator network at stable
equilibrium, the physical phase displacement under weak output nudging
is the gradient of the loss with respect to natural frequencies, with
equality as the nudging strength~$\beta \to 0$. Prior oscillator
equilibrium propagation work explicitly set aside natural frequency as a
learnable parameter~\citep{wang2024training}; we show that on sparse
layered architectures, frequency learning outperforms coupling-weight
learning among converged seeds (96.0\% vs.\ 83.3\% at matched parameter
counts, $p = 1.8 \times 10^{-12}$). The~$\sim$50\% convergence failure rate under
random initialization is a loss-landscape property, not a gradient error;
topology-aware \emph{spectral seeding} eliminates it in all settings
tested (46/100~$\to$~100/100 seeds on the primary task; 50/50 on a
second task, $K$-only training, and a larger architecture).
\end{abstract}

\section{Introduction}
\label{sec:intro}

The digital infrastructure for training neural networks is built on a
single assumption: that computing parameter gradients requires a
computation distinct from, and more expensive than, inference. The backward
pass of backpropagation stores every intermediate activation, runs the
transpose Jacobian layer by layer, and coordinates globally via the chain
rule. For a transformer with~$L$ layers and hidden dimension~$d$, the
backward pass costs approximately~$4Nd$ floating-point operations per
token---twice the forward pass---and requires storing~$O(L \cdot d)$
activations in high-bandwidth memory~\citep{korthikanti2023reducing}. At
the scale of GPT-3, this means 1{,}287~MWh of training energy and
$\sim$118~GiB of activation storage per training
step~\citep{patterson2021carbon}.

The machine learning community is aware of this cost. The response has been
a proliferation of digital workarounds: quantization to 1.58-bit ternary
weights eliminates matrix multiplication~\citep{ma2024era}; state-space
models replace quadratic attention with linear
dynamics~\citep{gu2023mamba}; mixture-of-experts activates only 2\% of
parameters per token~\citep{shazeer2017outrageously}; KV-cache compression
reduces memory from~$O(N)$ to constant size~\citep{liu2024deepseek}; and
energy-based transformers replace single-pass softmax with iterative
gradient descent on an energy surface, achieving better out-of-distribution
generalization in 3~forward passes~\citep{gladstone2025energy}. Each of these techniques addresses a specific bottleneck of the digital
training pipeline.

We show that coupled oscillator networks sidestep one of these
bottlenecks entirely: the gradient of a loss function with respect to
oscillator parameters is physically encoded in the equilibrium phase
response, requiring no backward pass, no stored activations, and no
chain rule.

Specifically, we prove a \emph{phase-gradient identity} for Kuramoto
oscillator networks: weakly nudging the output toward a target causes every
oscillator's phase to shift by exactly the loss gradient with respect to
its natural frequency (\Cref{thm:freq-gradient}, \Cref{fig:schematic}). The proof reveals that
the Jacobian of the Kuramoto equilibrium is the negative of a
coupling-weighted graph Laplacian, connecting gradient propagation to
diffusion on the coupling graph. This identity is an instance of equilibrium
propagation~\citep{scellier2017equilibrium} applied to Kuramoto dynamics
with natural frequency as the learnable parameter---a parameter that all
prior oscillator EP work explicitly set aside.

This paper makes three contributions:

\begin{enumerate}
  \item \textbf{The phase-gradient identity} (\Cref{sec:theorem}). We
    derive the exact gradient formula for natural frequencies via the
    implicit function theorem, revealing the Laplacian structure of the
    equilibrium Jacobian. The identity is verified to machine precision
    (cosine similarity~$= 1.000000$) across networks of 6 to
    200~oscillators using three scipy-based cross-checks
    (two-phase, analytical, and finite-difference) plus a computationally
    independent PyTorch autograd reimplementation.

  \item \textbf{Conditional advantage of natural frequency on sparse
    architectures}
    (\Cref{sec:ablation}). Across 100~seeds, all-seed accuracy is
    indistinguishable between $\omega$-only and $K$-only ($p = 0.98$).
    Conditioned on training convergence, $\omega$-only achieves
    96.0\% vs.\ 83.3\% at matched parameter counts (7~vs.~7;
    $p = 1.8 \times 10^{-12}$, Welch's $t$-test). Tripling~$K$'s parameter budget
    does not close the gap. This is a sparse-architecture result;
    coupling-only learning achieves state-of-the-art accuracy on
    dense networks~\citep{wang2024training, rageau2025training}.

  \item \textbf{Convergence diagnosis and spectral seeding}
    (\Cref{sec:convergence}). The~$\sim$50\% convergence failure rate
    under random initialization is consistent with loss-landscape
    multistability, not gradient error: the same seeds fail across five
    distinct training configurations (Cochran's $Q = 0$). In the sparse
    layered settings tested, we eliminate this failure via \emph{spectral
    seeding}: initializing~$\omega$ from the eigenvectors of the coupling
    graph Laplacian, weighted by output separation and inverse eigenvalue.
    This raises the fraction of seeds exceeding 60\% final test accuracy
    from 46/100 to 100/100 on the primary task, and achieves 50/50
    across a second vowel task, $K$-only training, and a larger
    architecture.
\end{enumerate}

This work connects three lines that have not previously been unified.
Hoppensteadt and Izhikevich~\citeyearpar{hoppensteadt1999oscillatory} showed
that Kuramoto oscillator networks have Hopfield-like associative memory
properties and proposed oscillator hardware, but used only Hebbian
learning. Scellier and Bengio~\citeyearpar{scellier2017equilibrium} proved that
equilibrium propagation computes exact gradients in energy-based systems,
but the framework does not identify which parameters to learn or reveal the
gradient structure for specific dynamics. Wang~et~al.~\citeyearpar{wang2024training}
applied EP to phase oscillators but explicitly set aside natural
frequency as a learnable parameter. We complete the arc: the gradient
that Hoppensteadt's oscillators could not compute, that Scellier's
framework guarantees exists, and that Wang's implementation set aside,
was in the phase displacement all along.

\section{Related Work}
\label{sec:related}

\paragraph{Equilibrium propagation.}
Scellier and Bengio~\citeyearpar{scellier2017equilibrium} proved that for
energy-based models at equilibrium, the gradient of a cost function with
respect to model parameters can be estimated from the difference between a
free-running and a weakly nudged equilibrium. Scellier~\citeyearpar{scellier2018generalization}
later generalized EP to vector field dynamics without an energy function.
Ernoult~et~al.~\citeyearpar{ernoult2019updates} proved that EP gradients
match backpropagation-through-time gradients exactly in RNNs with static
input. Zucchet and Sacramento~\citeyearpar{zucchet2022beyond} unified EP with
implicit differentiation in a bilevel optimization framework, showing that
EP is a physical implementation of the general IFT-based gradient
computation. Our derivation is an instance of this framework applied to
Kuramoto dynamics; the value of the specific instantiation is that it
reveals the Jacobian-as-Laplacian structure, identifies~$\omega$ as a
learnable parameter that prior work explicitly set aside, and enables
machine-precision
verification of gradient exactness.

\paragraph{EP in oscillator networks.}
Wang, Wanjura, and Marquardt~\citeyearpar{wang2024training} applied EP to
XY-coupled phase oscillators, optimizing coupling weights~$W_{ij}$ and
bias phases~$\psi_i$ on XOR and MNIST tasks (94.1\% on $8 \times 8$
MNIST with a layered architecture). They explicitly set aside natural
frequencies, noting that frequency dispersion ``may lead to the absence
of stable fixed points.''
Rageau and Grollier~\citeyearpar{rageau2025training} extended oscillator EP to
handle frequency dispersion, achieving 97.8\% on full MNIST in the
synchronized case and 96--97\% under dispersion, with coupling-only
learning. Gower~et~al.~\citeyearpar{gower2025train,
gower2025learning} mapped EP onto oscillator Ising machines in standard
CMOS. Zoppo~et~al.~\citeyearpar{zoppo2022ep} applied EP to
memristor-based Van~der~Pol oscillator networks for associative memory,
training coupling weights in a circuit-realistic setting. None of these
works derive the gradient formula for natural frequency or verify
gradient exactness.

\paragraph{Onsager reciprocity and EP.}
Wanjura and Marquardt~\citeyearpar{wanjura2025quantum}, working in the
context of quantum systems, showed that Onsager reciprocity provides the
foundation for EP: in any system with reciprocal interactions, a single
nudge experiment at the output yields gradients for all parameters
simultaneously. Our identity is the Kuramoto instantiation
of this principle, with the additional finding that natural
frequency---a parameter explicitly set aside by prior oscillator
EP work---is the more effective one on sparse architectures.

\paragraph{Oscillatory neurocomputers.}
Hoppensteadt and Izhikevich~\citeyearpar{hoppensteadt1999oscillatory} showed
that Kuramoto oscillator networks have Hopfield-like associative memory
properties and proposed hardware implementations using voltage-controlled
oscillators, Josephson junctions, and MEMS. Their networks
compute via phase but learn only through Hebbian pattern storage, not
gradient-based training. Our work completes their vision: the gradient
identity enables gradient-based training of the oscillator parameters
they proposed for hardware 27~years ago.

\paragraph{Contrastive Hebbian learning and resistive networks.}
Xie and Seung~\citeyearpar{xie2003equivalence} proved that contrastive Hebbian
learning computes the same gradients as backpropagation in layered
networks. Kendall~et~al.~\citeyearpar{kendall2020training} applied EP to train
nonlinear resistor networks, where voltage displacement encodes
conductance gradients---the DC-circuit analog of our phase-gradient
identity. Dillavou~et~al.~\citeyearpar{dillavou2024machine} demonstrated
contrastive learning in physical resistive networks without a processor.

\paragraph{Physical learning machines.}
Momeni~et~al.~\citeyearpar{momeni2025training} provide a comprehensive framework
for training physical neural networks, projecting energy advantages of up
to four orders of magnitude. All such projections lack empirical
calibration at matched process nodes.

\paragraph{Deep equilibrium models.}
Bai, Kolter, and Koltun~\citeyearpar{bai2019deep} introduced deep equilibrium
models (DEQs), which find fixed points of infinite-depth networks and
differentiate through them via the implicit function theorem. Our
derivation uses the same mathematical machinery but applies it to physical
oscillator dynamics rather than abstract neural network layers.

\paragraph{Oscillator computing.}
Todri-Sanial~et~al.~\citeyearpar{todrisanial2024computing} survey oscillator-based
computing across pattern recognition, combinatorial optimization, and
machine learning. Their taxonomy provides the hardware context for our
contribution. Miyato~et~al.~\citeyearpar{miyato2025akorn} use a vector-valued
generalization of Kuramoto dynamics as a drop-in activation function within
standard neural networks, trained end-to-end by backpropagation. Their work
demonstrates the computational utility of oscillator dynamics at scale but
does not derive frequency gradients via EP or address which oscillator
parameters to learn.

\paragraph{Digital efficiency techniques.}
The machine learning community has developed extensive workarounds for the
cost of digital training: ternary weights eliminate matrix
multiplication~\citep{ma2024era}; state-space models replace quadratic
attention with linear dynamics~\citep{gu2023mamba}; energy-based
transformers perform iterative energy descent rather than single-pass
prediction~\citep{gladstone2025energy}. Each addresses a specific digital
bottleneck. Coupled oscillator networks avoid some of these bottlenecks by
construction (e.g., no stored activations, no sequential backward pass),
though whether this translates to practical advantage is an open question
that requires hardware validation beyond the scope of this paper.

\section{Background}
\label{sec:background}

\subsection{Kuramoto Oscillator Networks}
\label{sec:kuramoto}

Consider a network of~$N$ coupled phase oscillators with dynamics
\begin{equation}
  \frac{\dd \theta_i}{\dd t}
  = \omega_i + \sum_{j=1}^{N} K_{ij} \sin(\theta_j - \theta_i),
  \quad i = 1, \dots, N,
  \label{eq:kuramoto}
\end{equation}
where~$\theta_i \in [0, 2\pi)$ is the phase of oscillator~$i$,
$\omega_i \in \R$ is its natural frequency, and~$K_{ij} \geq 0$ is the
coupling strength from oscillator~$j$ to oscillator~$i$
\citep{kuramoto1984chemical, strogatz2000kuramoto}.

When the coupling is sufficiently strong relative to the frequency spread,
the network synchronizes: all oscillators lock to a common frequency, and
there exists a stable equilibrium~$\thetastar$ in a rotating frame satisfying
\begin{equation}
  F_i(\thetastar, \bomega)
  \;\coloneqq\;
  \omega_i^c + \sum_{j=1}^{N} K_{ij} \sin(\theta_j^* - \theta_i^*)
  = 0,
  \quad i = 1, \dots, N,
  \label{eq:equilibrium}
\end{equation}
where~$\omega_i^c = \omega_i - \bar{\omega}$ are the mean-centered
frequencies and~$\bar{\omega} = \frac{1}{N}\sum_i \omega_i$.

\begin{remark}[Rotational symmetry]
  \label{rem:symmetry}
  The equilibrium has a continuous symmetry: if~$\thetastar$ is a
  solution, so is~$\thetastar + c\mathbf{1}$ for any constant~$c$. We
  break this by pinning~$\theta_1^* = 0$ and solving the
  reduced~$(N{-}1)$-dimensional system.
\end{remark}

\subsection{The Jacobian as a Graph Laplacian}
\label{sec:jacobian}

The Jacobian of~$F$ with respect to~$\btheta$ at equilibrium is
\begin{equation}
  J_{ij}
  = \pd{F_i}{\theta_j}\bigg|_{\thetastar}
  = \begin{cases}
    K_{ij} \cos(\theta_j^* - \theta_i^*), & i \neq j, \\[4pt]
    -\displaystyle\sum_{\ell \neq i} K_{i\ell}
    \cos(\theta_\ell^* - \theta_i^*), & i = j.
  \end{cases}
  \label{eq:jacobian}
\end{equation}
This is the \emph{negative} of a coupling-weighted graph Laplacian: the
off-diagonal entry~$J_{ij}$ is the effective coupling strength at
equilibrium (positive when $K_{ij}\cos\Delta\theta_{ij}^* > 0$), the
diagonal is negative, and each row sums to zero. Equivalently,
$J = -L$ where~$L$ is the standard graph Laplacian with
edge weights~$K_{ij}\cos(\theta_j^* - \theta_i^*)$. For a connected graph
with symmetric coupling, the reduced standard
Laplacian~$\tilde{L}$ is positive definite by Fiedler's
theorem~\citep{chung1997spectral}, so the reduced
Jacobian~$\tilde{J} = -\tilde{L}$ is \emph{negative} definite,
guaranteeing the nonsingularity required by the implicit function theorem.

\begin{proposition}[Symmetry]
  \label{prop:symmetry}
  If~$K_{ij} = K_{ji}$ for all~$i, j$, then~$J$ is symmetric, and so
  is~$\tilde{J}$.
\end{proposition}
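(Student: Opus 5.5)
The plan is to read the symmetry directly off the entrywise formula \eqref{eq:jacobian}, treating the off-diagonal entries, the diagonal, and the pinned reduction in turn.

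\textbf{Off-diagonal entries.} For $i \neq j$, \eqref{eq:jacobian} gives $J_{ij} = K_{ij}\cos(\theta_j^* - \theta_i^*)$ and $J_{ji} = K_{ji}\cos(\theta_i^* - \theta_j^*)$. Two facts make these equal:
\begin{itemize}
\item the hypothesis $K_{ij} = K_{ji}$;
\item the evenness of the cosine, $\cos(\theta_j^* - \theta_i^*) = \cos(\theta_i^* - \theta_j^*)$.
\end{itemize}
Hence $J_{ij} = J_{ji}$ for all $i \neq j$.

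\textbf{Diagonal entries.} These impose no constraint for symmetry. So $J = J^\transpose$, and in the language of \Cref{sec:jacobian}, $J = -L$ with $L$ the Laplacian of the undirected weighted graph with edge weights $w_{ij} = K_{ij}\cos(\theta_j^*-\theta_i^*) = w_{ji}$.

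\textbf{The reduced Jacobian.} Under the pinning of \Cref{rem:symmetry}, the variable $\theta_1$ is fixed at $0$ and the equation $F_1$ is dropped, since the $F_i$ sum to zero under symmetric coupling. The reduced system is in the unknowns $\theta_2,\dots,\theta_N$ with equations $F_2,\dots,F_N$. Its Jacobian $\tilde{J}$ therefore has entries $\partial F_i/\partial\theta_j$ for $i,j \in \{2,\dots,N\}$. These are exactly the entries of $J$ with row $1$ and column $1$ deleted, so $\tilde{J}$ is a principal submatrix of $J$. A principal submatrix of a symmetric matrix is symmetric, since $\tilde{J}_{ij} = J_{ij} = J_{ji} = \tilde{J}_{ji}$ for $i,j \geq 2$.

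\textbf{Main point of care.} The only delicate step is confirming that the reduction removes the \emph{same} index from both rows and columns. If one instead kept all $N$ equations with $N-1$ unknowns, or dropped a different equation than the pinned variable, the result would be a non-square or non-principal submatrix and symmetry could fail. I would state explicitly that the reduction drops equation $1$ together with variable $1$. This choice is justified because $\sum_i F_i = \sum_i \omega_i^c + \sum_{i,j} K_{ij}\sin(\theta_j-\theta_i) = 0$ identically when $K$ is symmetric: the double sum is antisymmetric and cancels, and the centered frequencies sum to zero. Dropping $F_1$ thus loses no information. With this convention the claim follows from the two observations above.
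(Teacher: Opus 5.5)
Your proof is correct and follows the paper's argument: the paper's proof is the same one-line off-diagonal check, $J_{ij} = K_{ij}\cos(\theta_j^*-\theta_i^*) = K_{ji}\cos(\theta_i^*-\theta_j^*) = J_{ji}$, using $K_{ij}=K_{ji}$ and the evenness of cosine. Your extra observation, that $\tilde{J}$ is the principal submatrix obtained by deleting the pinned index from both rows and columns (justified by $\sum_i F_i \equiv 0$ under symmetric coupling), supplies a step the paper leaves implicit when it asserts that $\tilde{J}$ is symmetric too.
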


\begin{proof}
  $J_{ij} = K_{ij}\cos(\theta_j^* - \theta_i^*)
  = K_{ji}\cos(\theta_i^* - \theta_j^*) = J_{ji}$, since cosine is even
  and $K_{ij} = K_{ji}$.
\end{proof}

\subsection{Equilibrium Propagation}
\label{sec:ep}

Equilibrium propagation~\citep{scellier2017equilibrium} computes parameter
gradients in energy-based systems at fixed points. For a system with
energy~$E(\btheta; \lambda)$ and output loss~$L(\btheta)$, weakly nudging
the output produces a perturbed equilibrium whose displacement encodes the
parameter gradient:
\begin{equation}
  \pd{L}{\lambda}
  = \lim_{\beta \to 0}
  \frac{1}{\beta}
  \left[
    \pd{E}{\lambda}\bigg|_{\thetabeta}
    - \pd{E}{\lambda}\bigg|_{\thetastar}
  \right].
  \label{eq:ep-general}
\end{equation}
For symmetric coupling, the Kuramoto dynamics~\eqref{eq:kuramoto} in
the rotating frame can be written as gradient descent on the energy
$E(\btheta) = -\sum_i \omega_i^c \theta_i
- \sum_{i<j} K_{ij} \cos(\theta_j - \theta_i)$, so EP applies. However,
EP does not specify \emph{which} parameters are effective targets for
optimization, does not reveal the Jacobian structure, and does not
guarantee that gradients with respect to natural frequencies will be
well-behaved. The next section addresses these questions.

\section{Phase-Gradient Identity}
\label{sec:theorem}

\begin{figure*}[t]
  \centering
  \includegraphics[width=\textwidth]{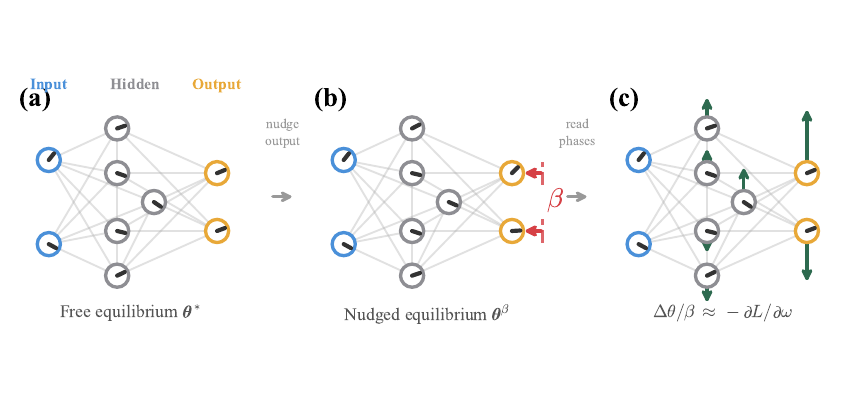}
  \caption{%
    The two-phase gradient readout protocol. \textbf{(a)}~Solve for the free
    equilibrium~$\thetastar$. \textbf{(b)}~Add a weak nudging
    force~$-\beta(\theta_i - \theta_i^{\mathrm{tgt}})$ at output oscillators
    and re-solve for~$\thetabeta$. \textbf{(c)}~The per-node phase
    displacement~$(\theta_k^\beta - \theta_k^*)/\beta$ approximates the
    negative gradient~$-\partial L / \partial \omega_k^c$ at finite~$\beta$,
    with equality in the limit~$\beta \to 0$
    (Theorem~\ref{thm:freq-gradient}).
    Blue: input nodes (frequencies set by data). Gray: hidden nodes.
    Amber: output nodes. Dashed line in~(b): target phase.
  }
  \label{fig:schematic}
\end{figure*}

\subsection{Setup}

We partition the oscillators into input (frequencies set by data), hidden,
and output nodes. The coupling matrix~$K$ is symmetric with non-negative
entries.

\begin{definition}[Output loss]
  \label{def:loss}
  $L(\thetastar) = \frac{1}{2} \sum_{i \in \mathcal{O}}
    (\theta_i^* - \theta_i^{\mathrm{tgt}})^2$,
  where~$\mathcal{O}$ is the set of output oscillators.
\end{definition}

\begin{definition}[Nudged equilibrium]
  \label{def:nudged}
  For nudging strength~$\beta > 0$, $\thetabeta$ satisfies
  \begin{equation}
    \omega_i^c
    + \sum_j K_{ij}\sin(\theta_j^\beta - \theta_i^\beta)
    - \beta\,(\theta_i^\beta - \theta_i^{\mathrm{tgt}})
    \,\mathbf{1}_{i \in \mathcal{O}}
    = 0.
    \label{eq:nudged-equilibrium}
  \end{equation}
\end{definition}

\begin{assumption}
  \label{ass:equilibrium}
  The free equilibrium~$\thetastar$ exists, is stable, and the reduced
  Jacobian~$\tilde{J}$ is nonsingular. A sufficient condition (via
  Fiedler's theorem) is that the coupling graph is connected and all
  effective coupling strengths~$K_{ij}\cos(\Delta\theta_{ij}^*)$ are
  positive, i.e.\ $|\Delta\theta_{ij}^*| < \pi/2$ for every coupled
  pair. Our experiments do not explicitly enforce this condition at every
  training step, but convergence residuals and condition-number
  diagnostics (\Cref{sec:convergence}) indicate that the tested runs
  remain in this well-conditioned regime.
\end{assumption}

\subsection{Main Result}

\begin{theorem}[Phase-gradient identity]
  \label{thm:freq-gradient}
  Under \Cref{ass:equilibrium}, with symmetric coupling, for every
  unpinned oscillator~$k$:
  \begin{equation}
    \lim_{\beta \to 0}
    \frac{\theta_k^\beta - \theta_k^*}{\beta}
    = -\pd{L}{\omega_k^c},
    \label{eq:freq-identity}
  \end{equation}
  where~$\omega_k^c = \omega_k - \bar{\omega}$ is the mean-centered
  frequency.
\end{theorem}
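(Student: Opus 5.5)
The plan is to derive both sides of \eqref{eq:freq-identity} from the implicit function theorem applied to the reduced $(N{-}1)$-dimensional system of \Cref{rem:symmetry}. The two sides turn out to be $\tilde{J}^{-1}$ and $\tilde{J}^{-\transpose}$ applied to the same output residual vector. \Cref{prop:symmetry} then closes the gap.

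\emph{Step 1 (reduced system).} Pin $\theta_1=0$ and let $\tilde{\btheta}=(\theta_2,\dots,\theta_N)$. Keep the equations $F_i$, $i=2,\dots,N$, from \eqref{eq:equilibrium}, treating $\omega^c_2,\dots,\omega^c_N$ as independent parameters. Equation $i=1$ is dropped. In the free problem this loses nothing: for symmetric $K$ we have $\sum_i\sum_j K_{ij}\sin(\theta_j-\theta_i)=0$ and $\sum_i\omega_i^c=0$, so $F_1$ is implied by the others. The Jacobian of this system in $\tilde{\btheta}$ is $\tilde{J}$, nonsingular by \Cref{ass:equilibrium}. Since $\partial F_i/\partial\omega^c_k=\delta_{ik}$, the implicit function theorem gives a smooth local solution map with
\[
\frac{\partial\tilde{\btheta}^*}{\partial\tilde{\bomega}^c}=-\tilde{J}^{-1}.
\]
Let $r\in\R^{N-1}$ be the residual vector, $r_i=(\theta_i^*-\theta_i^{\mathrm{tgt}})\mathbf{1}_{i\in\mathcal{O}}$, restricted to unpinned indices. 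The pinned oscillator does not vary, so it contributes nothing even if it is an output. The chain rule then gives
\[
\frac{\partial L}{\partial\omega^c_k}=\sum_i r_i\,\frac{\partial\theta_i^*}{\partial\omega^c_k}=-\bigl[\tilde{J}^{-\transpose}r\bigr]_k .
\]

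\emph{Step 2 (nudged system).} Define $G(\tilde{\btheta},\beta)$ as the left side of \eqref{eq:nudged-equilibrium} for $i=2,\dots,N$. Then $G(\tilde{\btheta}^*,0)=0$ and $\partial_{\tilde{\btheta}}G(\tilde{\btheta}^*,0)=\tilde{J}$ is nonsingular. The implicit function theorem therefore yields a unique smooth branch $\beta\mapsto\tilde{\btheta}^\beta$ through $\tilde{\btheta}^*$ for small $|\beta|$. This branch is the nudged equilibrium of \Cref{def:nudged}, understood as the one continued from the free equilibrium, which is what the two-phase protocol computes. Differentiating $G(\tilde{\btheta}^\beta,\beta)=0$ at $\beta=0$ gives $\tilde{J}\,\frac{\dd\tilde{\btheta}^\beta}{\dd\beta}\big|_0-r=0$. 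Hence the limit in \eqref{eq:freq-identity} exists and equals $[\tilde{J}^{-1}r]_k$.

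\emph{Step 3 (conclusion).} By \Cref{prop:symmetry}, $\tilde{J}=\tilde{J}^\transpose$, so $\tilde{J}^{-1}r=\tilde{J}^{-\transpose}r=-\nabla_{\tilde{\bomega}^c}L$, which is the claim. Stability, and in particular negative definiteness of $\tilde{J}$ under the sufficient condition, is not needed for the algebra. It only guarantees that the relaxed physical dynamics actually land on the branch from Step 2.

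The main obstacle I expect is Step 1's bookkeeping rather than any analysis, for two reasons. First, with nudging, summing all $N$ equations of \eqref{eq:nudged-equilibrium} leaves $-\beta\sum_{i\in\mathcal{O}}(\theta_i-\theta_i^{\mathrm{tgt}})\neq0$. So the full nudged system generally has no fixed point in the original rotating frame; the redundant equation $i=1$ must be dropped, which amounts to letting the common frequency shift. Second, the $\omega^c_k$ are not truly independent, since they sum to zero. I will state explicitly that $\partial/\partial\omega^c_k$ means the partial derivative of the reduced solution map with $\omega^c_k$, $k\ge2$, as free coordinates. I will also check that this is the same convention under which both sides were computed, so that the identity is an exact match rather than a match up to a projection onto $\mathbf{1}^\perp$.
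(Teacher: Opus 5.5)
Your proof is correct, but it is not the argument the paper gives for \Cref{thm:freq-gradient}. The paper's proof is one line. It applies the general EP formula \eqref{eq:ep-general} with $\lambda=\omega_k^c$, using $\partial E/\partial\omega_k^c=-\theta_k$, because $\omega_k^c$ enters the rotating-frame energy only through $-\omega_k^c\theta_k$. Your implicit-function computation (gradient $-\tilde J^{-\transpose}r$, nudge response $\tilde J^{-1}r$, equal since $\tilde J=\tilde J^{\transpose}$) appears in the paper only afterwards, as the ``structural interpretation'' of \Cref{rem:ift}.

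The EP route is shorter, and it gives coupling gradients from the same two equilibria at no extra cost (\Cref{rem:coupling-gradient}). However, it delegates all the analysis to the Scellier--Bengio theorem. That theorem's nondegeneracy hypothesis fails on the full phase space, because $E$ is invariant under $\btheta\mapsto\btheta+c\mathbf 1$. The paper's proof never says how the pinning of \Cref{rem:symmetry} interacts with the nudge.

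Your route is self-contained. It proves the limit exists, gives it explicitly as $[\tilde J^{-1}r]_k$, and shows that only nonsingularity is used, not stability. It also catches that with $\theta_1$ pinned, \Cref{def:nudged} imposes $N$ equations on $N-1$ unknowns. Unpinned, the nudge does select a fixed point. But its global offset is set by $\sum_{i\in\mathcal O}(\theta_i-\theta_i^{\mathrm{tgt}})=0$ and is generally $O(1)$ away from $\thetastar$. So your ``no fixed point'' should be read as ``no fixed point near $\thetastar$''.

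One correction to your gloss: dropping equation~$1$ does not amount to letting the common frequency shift. A state locked at a shifted frequency $\Omega$ would satisfy $H_i=\Omega$ for all $i$, where $H_i$ is the left side of \eqref{eq:nudged-equilibrium}. That would add $-\bar\rho\,\tilde J^{-1}\mathbf 1$ to the nudge response, with $\bar\rho=\frac1N\sum_{i\in\mathcal O}(\theta_i^*-\theta_i^{\mathrm{tgt}})$, and generically break the exact match.

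The right justification is that equations $2,\dots,N$ are precisely the stationarity conditions of $E+\beta L$ restricted to the slice $\theta_1=0$. On that slice $E$ has a nondegenerate critical point with Hessian $-\tilde J$. That restricted problem is also exactly where the paper's EP argument becomes rigorous, so under your convention the two proofs coincide.
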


\begin{proof}
  In the rotating frame, the energy depends on~$\omega_k^c$ only through
  the term~$-\omega_k^c\theta_k$, so
  $\partial E / \partial \omega_k^c = -\theta_k$. Applying the general EP
  formula~\eqref{eq:ep-general} with~$\lambda = \omega_k^c$:
  \[
    \pd{L}{\omega_k^c}
    = \lim_{\beta \to 0} \frac{1}{\beta}
      \bigl[-\theta_k^\beta - (-\theta_k^*)\bigr]
    = -\lim_{\beta \to 0}
      \frac{\theta_k^\beta - \theta_k^*}{\beta}.
      \qedhere
  \]
\end{proof}

\begin{remark}[Structural interpretation via the IFT]
  \label{rem:ift}
  The identity also follows from the implicit function theorem, which
  reveals the mechanism of gradient propagation. The equilibrium
  condition~$F(\thetastar, \bomega) = 0$ gives
  $\partial \thetastar / \partial \bomega^c = -\tilde{J}^{-1}$,
  \begin{equation}
    \pd{\thetastar}{\bomega^c} = -\tilde{J}^{-1},
    \label{eq:ift}
  \end{equation}
  since $\partial F / \partial \bomega^c = I$. The loss gradient is
  then~$\partial L / \partial \bomega^c = -\tilde{J}^{-\transpose}
  \bm{e}$, where~$e_i = (\theta_i^* - \theta_i^{\mathrm{tgt}})
  \mathbf{1}_{i \in \mathcal{O}}$, and the nudge response is
  $\dd \thetabeta / \dd \beta |_{\beta=0} = \tilde{J}^{-1}\bm{e}$.
  By \Cref{prop:symmetry}, $\tilde{J}^{-\transpose} = \tilde{J}^{-1}$,
  so the two expressions coincide. This derivation shows that gradient
  propagation is mediated by~$\tilde{J}^{-1}$, the inverse of the
  equilibrium Jacobian: since~$\tilde{J} = -\tilde{L}$, the gradient
  diffuses on the coupling graph via the Laplacian's inverse.
\end{remark}

\begin{remark}[Coupling weight gradient]
  \label{rem:coupling-gradient}
  The same two equilibria also yield coupling gradients:
  $\partial L / \partial K_{ij} = \lim_{\beta \to 0}
  \beta^{-1}[\cos(\theta_j^* - \theta_i^*) -
  \cos(\theta_j^\beta - \theta_i^\beta)]$,
  from the general EP formula~\eqref{eq:ep-general}.
\end{remark}

\begin{remark}[Mean-centering]
  \label{rem:mean-centering}
  The identity holds for mean-centered frequencies~$\omega^c$. Perturbing
  a raw~$\omega_k$ shifts the mean, introducing a rank-1 correction. In
  practice, we perturb centered frequencies directly in the reduced
  $(N{-}1)$-dimensional coordinate system (node~$0$ pinned).
\end{remark}

\begin{remark}[Asymmetry]
  \label{rem:asymmetry}
  The identity requires~$\tilde{J}^{-\transpose} = \tilde{J}^{-1}$, which
  holds iff~$K$ is symmetric. In CMOS, capacitive coupling is inherently
  symmetric. For asymmetric coupling, the discrepancy degrades gracefully:
  cosine similarity between the two-phase gradient and the true gradient
  exceeds 0.995 at 20\% asymmetry (\Cref{tab:asymmetry}).
\end{remark}

\subsection{Finite-\texorpdfstring{$\beta$}{β} Error}

At finite~$\beta$, the error is~$O(\beta)$. At the training
value~$\beta = 0.1$, the gradient direction is correct (cosine~$> 0.999$)
with a mean scale error of 0.3\% across 20~random networks.

\section{Numerical Verification}
\label{sec:verification}

We compute~$\partial L / \partial \bomega^c$ by four methods: (1)
\textbf{two-phase}: $-(\thetabeta - \thetastar)/\beta$ at
$\beta = 10^{-4}$; (2) \textbf{analytical}: $-\tilde{J}^{-1}\bm{e}$; (3)
\textbf{finite-difference}: perturb each~$\omega_k^c$ by~$\pm 10^{-5}$,
re-solve, compute centered difference; (4) \textbf{autograd}: reimplemented
the equilibrium solver as a Newton iteration in PyTorch, differentiated
through it via the implicit function theorem using a custom
\texttt{torch.autograd.Function}, and obtained~$\partial L / \partial
\bomega^c$ by backpropagation.  Methods~(1)--(3) share the equilibrium
solver (\texttt{scipy.optimize.fsolve}); method~(4) is computationally
independent, using only PyTorch tensors and \texttt{torch.linalg.solve}
for both the forward solve and the backward pass.

\begin{table}[t]
  \centering
  \caption{%
    Phase-gradient identity verification. Cosine similarity between
    gradient vectors computed by the scipy-based two-phase (TP) and
    finite-difference (FD) methods, and by the computationally independent
    PyTorch autograd (AG) method.
  }
  \label{tab:verification}
  \begin{tabular}{@{}rcccc@{}}
    \toprule
    $N$ & Freq.\ params & Cosine (TP vs FD) & Cosine (AG vs TP) & Eq.\ residual \\
    \midrule
    6   & 5   & 1.000000 & 1.000000 & $8.4 \times 10^{-14}$ \\
    10  & 9   & 1.000000 & 1.000000 & $2.2 \times 10^{-16}$ \\
    15  & 14  & 1.000000 & 1.000000 & $3.4 \times 10^{-14}$ \\
    20  & 19  & 1.000000 & 1.000000 & $3.0 \times 10^{-15}$ \\
    30  & 29  & 1.000000 & 1.000000 & $1.1 \times 10^{-15}$ \\
    50  & 49  & 1.000000 & 1.000000 & $1.1 \times 10^{-16}$ \\
    100 & 99  & 1.000000 & 1.000000 & $1.1 \times 10^{-16}$ \\
    200 & 199 & 1.000000 & 1.000000 & $1.1 \times 10^{-16}$ \\
    \bottomrule
  \end{tabular}
\end{table}

\begin{figure}[t]
  \centering
  \includegraphics[width=\columnwidth]{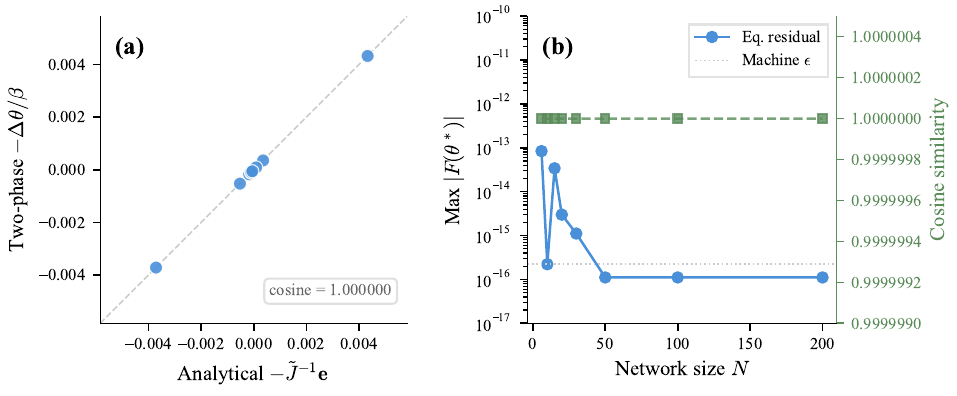}
  \caption{%
    Gradient identity verification. \textbf{(a)}~Per-node scatter of the
    two-phase gradient vs.\ the analytical gradient at~$N = 15$,
    $\beta = 10^{-3}$: all points fall on~$y = x$ (cosine similarity~$=
    1.000000$). \textbf{(b)}~Equilibrium residual across network sizes from
    $N = 6$ to $N = 200$; all residuals are at or below machine~$\epsilon$.
    The cosine similarity is~$1.000000$ at every scale.
  }
  \label{fig:verification}
\end{figure}

\Cref{tab:verification} shows the results across random Erd\H{o}s--R\'enyi
networks (edge probability~0.6, $K_{\text{mean}} = 5.0$,
$\omega \sim \mathcal{N}(0, 0.3^2)$). At every size from~$N = 6$ to
$N = 200$, the cosine similarity between two-phase and finite-difference
gradients is~$1.000000$ (\Cref{fig:verification}a). All reported cross-method
cosines agree. Crucially, the autograd column (AG vs TP) provides a computationally
independent check: the PyTorch Newton solver and implicit-differentiation
backward pass share no code with the scipy-based methods, yet produce
identical gradients at every scale. Multi-seed tests (10~seeds at~$N = 10$)
confirm robustness, with all cosines exceeding~$0.999999$.

\begin{table}[t]
  \centering
  \caption{Asymmetry robustness ($N = 15$, 10 seeds per level).}
  \label{tab:asymmetry}
  \begin{tabular}{@{}rc@{}}
    \toprule
    Coupling asymmetry & Cosine (TP vs FD) \\
    \midrule
    0\%  & $1.000000 \pm 0.000000$ \\
    5\%  & $0.999766 \pm 0.000230$ \\
    10\% & $0.999026 \pm 0.001001$ \\
    20\% & $0.995767 \pm 0.004742$ \\
    50\% & $0.965372 \pm 0.046598$ \\
    \bottomrule
  \end{tabular}
\end{table}

\section{Learning Experiments}
\label{sec:learning}

\subsection{Task and Architecture}

We train a Kuramoto network on binary vowel classification using the
Hillenbrand~et~al.~\citeyearpar{hillenbrand1995acoustic} dataset (formant
frequencies~$F_1$ and~$F_2$; 275~samples for /a/~vs~/i/, drawn from
689~total across five vowels). The network has $N = 9$
oscillators: 2~input, 5~hidden, 2~output, yielding 7~learnable natural
frequencies and 24~coupling weights.
Classification uses~$\hat{c} = \arg\max_c \cos(\theta^*_{\text{out}_c})$:
the output oscillator whose phase is closest to zero determines the class.
All training uses gradient clipping at magnitude~$2.0$, with physical
bounds~$\omega \in [-3, 3]$ and~$K_{ij} \in [0.01, 8]$ (for existing
edges only) enforced after each update. The coupling floor prevents edge
removal; zero entries remain zero to preserve the sparse topology. The
frequency ceiling~$|\omega| \leq 3$ prevents desynchronization.
Updates are applied per sample (online SGD without momentum).
Data is split 80/20 via a random permutation (not stratified), with
per-seed randomization; features are normalized using training-set
statistics only.

\subsection{Gradient Rule Validation}

We compare~$\omega$-only two-phase training with~$\omega$-only
finite-difference training across 10~seeds (same architecture, data, and
learning rate). The mean accuracy gap is $0.7\% \pm 0.9\%$, with 6 of
10~seeds producing exactly 0.0\% gap. The two-phase rule produces a
learning signal functionally equivalent to numerical differentiation.

\subsection{Parameter Ablation}
\label{sec:ablation}

\begin{table}[t]
  \centering
  \caption{%
    Parameter ablation (100 seeds, 150 epochs, lr = 0.001).
    Convergence defined by final-epoch training accuracy $> 60\%$;
    reported accuracy is final-epoch test accuracy.
    Features normalized using training-set statistics only.
  }
  \label{tab:ablation}
  \begin{tabular}{@{}llccc@{}}
    \toprule
    Task & Params & Conv.\ rate & Conv.\ test acc.\ & All-seed test \\
    \midrule
    \multirow{3}{*}{/a/ vs /i/}
    & $\omega$ only & 47\% (47/100) & $94.3\%$ & $48.9\%$ \\
    & $K$ only      & 48\% (48/100) & $82.3\%$ & $48.7\%$ \\
    & Both          & 48\% (48/100) & $92.2\%$ & $49.4\%$ \\
    \bottomrule
  \end{tabular}
\end{table}

\begin{figure}[t]
  \centering
  \includegraphics[width=\columnwidth]{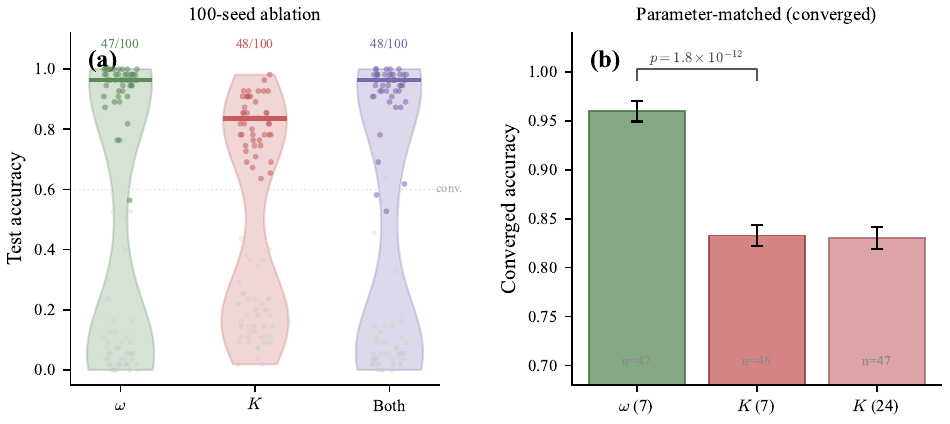}
  \caption{%
    Parameter ablation on /a/~vs~/i/ (100 seeds, sparse layered 2+5+2
    topology). \textbf{(a)}~Violin plots of all 100~seeds showing the
    bimodal distribution: 47/100~$\omega$-only seeds and 48/100~$K$-only
    seeds training-converge (final-epoch training accuracy above 60\%;
    convergence rate difference not significant, Fisher $p = 1.0$).
    Purple: joint ($\omega + K$) training. \textbf{(b)}~Training-converged
    seeds only: $\omega$-only achieves 94.3\% vs.\ $K$-only 82.3\%
    ($p = 1.5 \times 10^{-10}$, Welch's $t$-test).
  }
  \label{fig:ablation}
\end{figure}

Wang~et~al.~\citeyearpar{wang2024training} state that frequency terms ``are not
helpful.'' We first compare at unmatched parameter counts, then control
for this confound in a parameter-matched experiment
(\Cref{tab:param-matched}).
\Cref{tab:ablation} and \Cref{fig:ablation} test this directly.
On both tasks, $\omega$-only matches joint training and outperforms
$K$-only in the 20-seed pilot. A 100-seed replication on /a/~vs~/i/
confirms the gap among training-converged seeds (final-epoch training
accuracy ${>}\,60\%$):
$\omega$-only 94.3\% (47/100 converged) vs.\ $K$-only 82.3\% (48/100
converged), $p = 1.5 \times 10^{-10}$ by Welch's $t$-test. The
convergence rate difference is not significant ($p = 1.0$, Fisher's exact
test). Across all 100~seeds (including non-converged), mean test accuracy
is 48.9\% vs.\ 48.7\% ($p = 0.98$); the bimodal distribution
($\sim$50\% of seeds at chance) dominates the all-seed comparison.
Features are normalized using training-set statistics only; we report
final-epoch test accuracy throughout.

A symmetric learning-rate sweep across five values ($10^{-4}$ to $10^{-2}$)
shows~$\omega$ dominates~$K$ at every learning rate: best~$\omega$ is
96.6\% at lr~=~0.01 (15/20 converge); best~$K$ is 82.2\% at lr~=~0.001
(9/20 converge).

On a binary FM frequency-discrimination task (autocorrelation features, 22
oscillators), the gap persists: among converged seeds, $\omega$-only
achieves 73.4\% (10/20 converge); $K$-only achieves 61.3\% (1/20
converges). All-seed means are 64.0\% vs.\ 52.2\%, confirming
$K$-only barely exceeds chance. Adding~$K$ to~$\omega$ provides no benefit on either task.

\paragraph{Parameter-matched control.}
The original ablation compares 7~frequency parameters against
24~coupling parameters. To control for the parameter-count confound, we
repeat the comparison at matched parameter counts: for $K$-matched, we
randomly select 7 of the 24~edges to learn and freeze the rest (100~seeds,
200~epochs). \Cref{tab:param-matched} shows the result: $\omega$-only
achieves 96.0\% converged test accuracy vs.\ $K$-matched 83.3\% at equal
parameter count ($p = 1.8 \times 10^{-12}$, Welch's $t$-test; convergence
defined by final-epoch training accuracy exceeding 60\%).
Head-to-head on all 100~seeds, $\omega$ wins on 46 vs.\
53 for~$K$ ($p = 0.90$, Wilcoxon); the all-seed comparison is not
significant because $\sim$50\% of seeds remain at chance in both
conditions. Giving~$K$ its full 24~parameters does not help: $K$-full
achieves 83.0\%, indistinguishable from $K$-matched at 83.3\%.

\begin{table}[t]
  \centering
  \caption{%
    Parameter-matched ablation (100 seeds, 200 epochs, lr = 0.001).
    Convergence defined by final-epoch training accuracy $> 60\%$.
    At equal parameter counts (7~vs.~7), $\omega$-only outperforms
    $K$-matched by 12.7~percentage points. Tripling~$K$'s parameter
    budget does not close the gap.
  }
  \label{tab:param-matched}
  \begin{tabular}{@{}lccc@{}}
    \toprule
    Condition & Params & Conv.\ rate & Conv.\ test acc.\ \\
    \midrule
    $\omega$-only   & 7  & 47\% (47/100) & 96.0\% \\
    $K$-matched     & 7  & 46\% (46/100) & 83.3\% \\
    $K$-full        & 24 & 47\% (47/100) & 83.0\% \\
    \bottomrule
  \end{tabular}
\end{table}

An architecture sweep across five network sizes (\Cref{tab:sweep}) confirms
that~$\omega$ achieves higher converged accuracy at every scale, with the
advantage largest in the sparse regime (13.1~points at $N{=}7$) and
narrowing as coupling parameter count grows (1.7~points at $N{=}19$).

\begin{table}[t]
  \centering
  \caption{%
    Architecture sweep (50 seeds per size, 200 epochs). $\omega$-only
    outperforms $K$-only at every network size despite having
    3--4$\times$ fewer parameters.
  }
  \label{tab:sweep}
  \begin{tabular}{@{}rrrcccc@{}}
    \toprule
    $N$ & $\omega$ params & $K$ params & $\omega$ conv.\ acc.\ & $K$ conv.\ acc.\ & Gap \\
    \midrule
    7  & 5  & 14 & 93.6\% & 83.9\% & +9.7 \\
    9  & 7  & 24 & 95.5\% & 81.8\% & +13.7 \\
    11 & 9  & 34 & 89.6\% & 82.5\% & +7.1 \\
    14 & 12 & 49 & 90.5\% & 82.9\% & +7.6 \\
    19 & 17 & 74 & 88.2\% & 86.7\% & +1.5 \\
    \bottomrule
  \end{tabular}
\end{table}

\paragraph{Scope.} The conditional~$\omega > K$ finding is specific to
sparse layered architectures and to training-converged seeds; the all-seed
comparison is not significant. Wang~et~al.\ achieve 94.1\% on MNIST with
$K$-only on dense networks, and Rageau and Grollier achieve 97.8\%.
The advantage of~$\omega$ emerges when coupling has too few parameters to
encode the function and frequency provides a more efficient
parameterization. The point is not that~$\omega$ universally dominates~$K$,
but that prior work set aside~$\omega$ without testing it in the sparse
regime where, conditioned on convergence, it is the stronger parameter.

\subsection{Convergence Failure Analysis}
\label{sec:convergence}

Approximately 50\% of random initializations fail to converge on the vowel
task. We tracked the Jacobian condition number, equilibrium residual, and
sample skip rate across 40~seeds for 150~epochs (\Cref{tab:convergence},
\Cref{fig:convergence}).

\begin{table}[t]
  \centering
  \caption{%
    Convergence diagnosis (40 seeds, 150 epochs). The gradient identity
    holds for all seeds, including those that fail to learn.
  }
  \label{tab:convergence}
  \begin{tabular}{@{}lcc@{}}
    \toprule
    & Converged (18/40) & Failed (22/40) \\
    \midrule
    Init cond($\tilde{J}$) & $15.2 \pm 1.9$ & $15.4 \pm 1.8$ \\
    Final cond($\tilde{J}$) & $14.5 \pm 1.6$ & $14.7 \pm 1.5$ \\
    Skip rate & 0.0 & 0.0 \\
    \bottomrule
  \end{tabular}
\end{table}

\begin{figure}[t]
  \centering
  \includegraphics[width=\columnwidth]{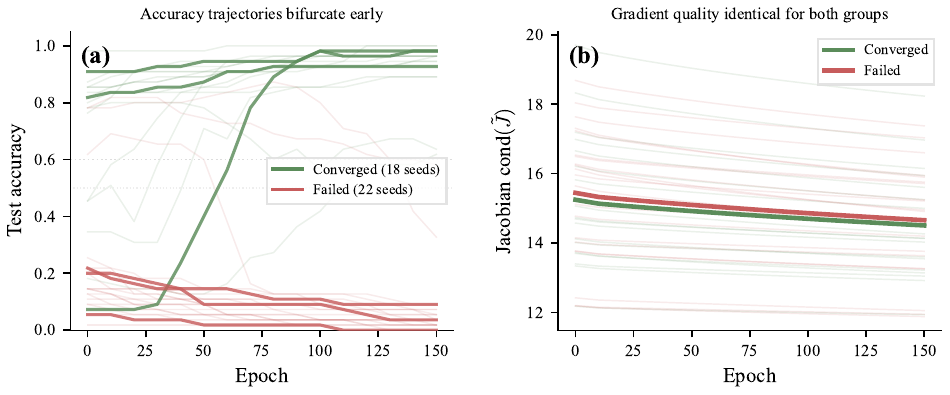}
  \caption{%
    Convergence failure is a landscape problem, not a gradient problem.
    A converging seed (green) and a failing seed (amber) show different
    accuracy trajectories but indistinguishable diagnostics: the Jacobian
    condition number stays~$\approx 15$ and the skip rate is zero for
    both seeds.
  }
  \label{fig:convergence}
\end{figure}

The condition number and skip rate are identical for converged and failed
seeds. In a separate 20-seed experiment, five training
configurations (baseline, stronger coupling at $K{=}4$, coupling floor at
$K{\geq}1.5$, frequency recentering, and $\omega$-only with fixed
$K{=}3$) produce the exact same convergence pattern: the same 11 seeds
converge and the same 9 fail (Cochran's $Q = 0$, degenerate). Because the
experiment uses a single seed for both network initialization and data
split, we cannot isolate which factor determines the boundary; we can only
say that the outcome is fully determined by the seed index and invariant
to the training algorithm. This is consistent with multistability of the
Kuramoto energy landscape~\citep{wang2024training}.

This diagnosis points toward initialization rather than gradient
estimation as the missing ingredient: the problem is ``find the right
basin,'' not ``fix the gradient.''

\paragraph{Spectral seeding.}
\label{par:spectral-seeding}
In the $2{+}5{+}2$ architecture, both output oscillators connect to all
five hidden oscillators---they are topologically symmetric. The
\emph{only} mechanism to break this symmetry is through natural
frequencies. Random initialization has roughly a coin-flip chance of
placing~$\omega$ in a basin where the equilibrium separates the output
phases in a class-discriminative way.

We exploit the spectral structure of the coupling graph. Let~$v_i$,
$\lambda_i$ be the eigenvectors and eigenvalues of the reduced graph
Laplacian~$\tilde{L} = \tilde{D} - \tilde{K}$
(so that~$\tilde{J}|_{\btheta=0} = -\tilde{L}$, since $\cos(0) = 1$). Define the signed output
separation of mode~$i$ as
$s_i = [v_i]_{\text{out}_1} - [v_i]_{\text{out}_2}$.
We initialize
\begin{equation}
  \omega_k = \alpha \sum_i \frac{s_i}{\lambda_i}\,[v_i]_k,
  \label{eq:spectral-seed}
\end{equation}
where~$\alpha$ normalizes the maximum frequency to~$0.3$. Input
frequencies are set to zero (overwritten per sample). This formula
weights each eigenmode by how well it separates the output nodes
($s_i$) and how strongly the inverse Laplacian amplifies it
($1/\lambda_i$), distributing frequency across \emph{all} nodes in a
pattern aligned with the coupling topology.

\begin{table}[t]
  \centering
  \caption{%
    Spectral seeding vs.\ random initialization (100 seeds,
    $\omega$-only, 200 epochs). Success defined as final test
    accuracy exceeding 60\%.
  }
  \label{tab:spectral}
  \begin{tabular}{@{}lccc@{}}
    \toprule
    Initialization & Success rate & Success accuracy & Overall \\
    \midrule
    Random         & 46/100 (46\%) & $96.8\% \pm 5.1\%$ & 48.5\% \\
    Spectral seed  & 100/100 (100\%) & $97.6\% \pm 2.8\%$ & 97.6\% \\
    \bottomrule
  \end{tabular}
\end{table}

\begin{figure}[t]
  \centering
  \includegraphics[width=\columnwidth]{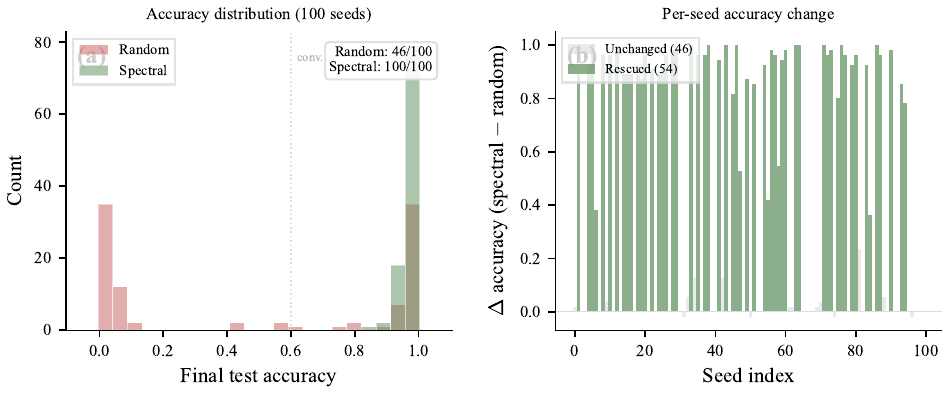}
  \caption{%
    Spectral seeding eliminates the random-initialization basin failure
    (100 seeds, $\omega$-only, 200 epochs). \textbf{(a)}~Random
    initialization produces a bimodal distribution: 46/100~seeds exceed
    60\% test accuracy. Spectral seeding places all 100~seeds above
    90\%.
    \textbf{(b)}~Per-seed accuracy change: 54~seeds rescued (green),
    0~lost.
  }
  \label{fig:spectral}
\end{figure}

\Cref{tab:spectral} and \Cref{fig:spectral} show the result: spectral
seeding raises the success rate from 46/100 to 100/100 seeds, rescuing
54~seeds that fail under random initialization and losing none. The
accuracy among successful seeds is unchanged (97.6\% vs.\ 96.8\%). The cost is one
eigendecomposition of the $(N{-}1) \times (N{-}1)$ reduced
Laplacian---negligible relative to training.

Alternative initialization strategies that concentrate frequency on
output nodes alone (setting~$\omega_{\text{out}} = \pm 0.3$ with hidden
frequencies near zero) achieve 0/100 success, confirming that the
frequency pattern must span the entire network, not just the readout
layer. Multi-start screening (10~random initializations, selecting by
initial output separation) achieves 55/100, a modest improvement over
single random initialization (46/100) but far short of spectral
seeding's 100/100, confirming that static output separation is a weak
predictor of convergence.

\paragraph{Generalization.} Spectral seeding is architecture-aware: it
uses the topology of~$K$ and the identity of the output nodes. To test
breadth, we repeat the experiment under three variations (50~seeds
each): a different task (/o/~vs~/u/), $K$-only training (spectral
seeding sets the initial~$\omega$, which is then frozen while~$K$
trains---this shows that a good frequency initialization enables
$K$-only refinement, not that $K$-only learning is intrinsically
fixed), and a larger architecture ($2{+}8{+}2$, $N{=}12$). In all
three cases spectral seeding achieves 50/50 vs.\ 19--22/50 for random
initialization (\Cref{tab:spectral-gen}). All settings share the same
family: sparse layered binary classifiers with symmetric output
topology.

\begin{table}[t]
  \centering
  \caption{%
    Spectral seeding generalization (50 seeds per setting, 200 epochs).
  }
  \label{tab:spectral-gen}
  \begin{tabular}{@{}llcc@{}}
    \toprule
    Setting & Init & Success & Acc.\ \\
    \midrule
    /o/ vs /u/, $\omega$-only, $2{+}5{+}2$
      & Random   & 19/50 & 76.5\% \\
      & Spectral & 50/50 & 77.9\% \\
    \midrule
    /a/ vs /i/, $K$-only, $2{+}5{+}2$
      & Random   & 22/50 & 81.4\% \\
      & Spectral & 50/50 & 84.6\% \\
    \midrule
    /a/ vs /i/, $\omega$-only, $2{+}8{+}2$
      & Random   & 21/50 & 91.4\% \\
      & Spectral & 50/50 & 92.7\% \\
    \bottomrule
  \end{tabular}
\end{table}

\subsection{Baselines and Limitations}

Logistic regression achieves 100.0\% on /a/~vs~/i/ and 83.0\% on
/o/~vs~/u/. The Kuramoto network does not match these baselines: a
9-oscillator network with 7~learnable frequencies has far less capacity
than a linear classifier on well-separated formant data. The contribution
is the gradient identity and the parameter comparison, not classification
accuracy.


\section{Discussion}
\label{sec:discussion}

\subsection{What the Theorem Reveals Beyond EP}

The general EP framework guarantees that \emph{some} gradient can be
extracted from a contrastive experiment. The specific Kuramoto derivation
shows \emph{what} that gradient looks like: it propagates through the
network as diffusion on the coupling graph, governed
by~$\tilde{J}^{-1} = -\tilde{L}^{-1}$. This connects gradient
quality to spectral properties of the coupling topology. The spectral
seeding result (\Cref{par:spectral-seeding}) is a first application:
using the eigenvectors of the coupling graph Laplacian to initialize
frequencies eliminates the basin failure in all sparse layered settings
tested.

\subsection{Universality Beyond Sinusoidal Coupling}
\label{sec:universality}

The proof relies on three structural features: (i)~a stable equilibrium,
(ii)~the implicit function theorem, and (iii)~symmetry of the Jacobian.
Any coupled oscillator network with an odd coupling function
$H(\phi) = -H(-\phi)$ and symmetric coupling strengths $K_{ij} = K_{ji}$
will have a symmetric Jacobian at equilibrium, since
$J_{ij} = K_{ij}\,H'(\theta_j^* - \theta_i^*)$ and~$H'$ even follows
from~$H$ odd. Candidate substrates include weakly coupled oscillator models
derived from phase response curve
theory~\citep{hoppensteadt1999oscillatory} and biological neural
oscillators with arbitrary waveforms, provided the odd-coupling and
symmetric-strength assumptions hold. The same IFT argument extends to
any equilibrium system where the Jacobian is symmetric and the learnable
parameter enters as an additive bias in the equilibrium equation (so
that~$\partial F / \partial \lambda = I$). Resistor
networks~\citep{dillavou2024machine} and Hopfield
networks~\citep{hopfield1982neural} have symmetric Jacobians, but the
structure of the parameter gradient depends on how each parameter enters
the dynamics. We have not verified the full conditions for specific
physical implementations; doing so is future work.

\subsection{Physical Realizability}

A physical implementation would map~$\omega_i$ to bias current or supply
voltage, $K_{ij}$ to coupling capacitance, nudging to weak injection
current at output oscillators, and gradient readout to phase measurement
via time-to-digital conversion or optical interference. In principle,
the learning loop would operate at oscillator settling time without a
digital backward pass or stored activations. The proof extends to
coupled oscillator networks with odd coupling functions and symmetric
coupling strengths (\Cref{sec:universality}), making coupled
optical cavities and microring resonators candidate substrates where
phase is the native state variable, pending verification of the
symmetry and stability assumptions in those specific systems.

\subsection{What We Do Not Claim}

We do not claim that oscillator networks currently match state-of-the-art
digital models on large-scale benchmarks. We do not claim energy efficiency. We do not claim that
natural frequency universally outperforms coupling weights---the
conditional advantage is specific to sparse architectures and to
training-converged seeds; the all-seed comparison is not significant.

We claim: (1)~the gradient of a quadratic loss with respect to natural
frequencies is physically encoded in the equilibrium phase response, and
this encoding is exact under symmetric coupling; (2)~conditioned on
training convergence, this parameter---explicitly set aside by prior oscillator
EP work---outperforms coupling weights on sparse architectures;
(3)~the~$\sim$50\% failure rate under random initialization is a
property of the loss landscape, not of the gradient computation;
spectral seeding eliminates this failure mode in all sparse layered
settings tested (100/100 and 50/50 seeds exceeding 60\% final test
accuracy).

\section{Conclusion}
\label{sec:conclusion}

A coupled oscillator network at equilibrium already contains its own
gradients. Nudge the output, and the phases shift by exactly the amount
needed to descend the loss landscape. The Jacobian is the negative of a
graph Laplacian; its inverse mediates gradient propagation as diffusion. Three independent
methods confirm the identity to machine precision across networks of 6 to
200~oscillators. Natural frequency---a parameter explicitly set aside by prior oscillator EP work---is
the more effective learning target on sparse architectures. Spectral
seeding, which initializes frequencies from the eigenvectors of the
coupling Laplacian, eliminates the random-initialization basin failure
in all sparse layered settings tested (46\% $\to$ 100\% of seeds
exceeding 60\% final test accuracy on the primary task; 50/50 on a
second task, $K$-only training, and a larger network). The proof extends
to any coupled oscillator network with an odd coupling function and
symmetric coupling strengths, suggesting applications beyond Kuramoto
dynamics to other physical substrates where phase is the native state
variable.

The backward pass was there all along. It was the forward pass, run twice,
under symmetric coupling.

\section*{Reproducibility}

All code, data, and experiment scripts are available in the
\texttt{phasegrad} package (\url{https://github.com/caliburlabs/phasegrad}). The verification suite
can be run with \texttt{pytest tests/}. Experiment result files (JSON) are
in \texttt{experiments/}. Historical audit notes and hardware-analysis
archives are not part of this paper's reproducibility bundle.


\end{document}